\newtheorem{theorem}{Theorem}
\title{A Note on Optimizing the Ratio of Monotone Supermodular Functions}
\author{Wenxin Li\\
The Ohio State University\\
{\tt wenxinliwx.1@gmail.com}\\
{li.7328@osu.edu}\\	
}
\begin{document}
\maketitle
\begin{abstract}
We show that for the problem of minimizing (or maximizing) the ratio of two supermodular functions, no bounded approximation ratio can be achieved via polynomial number of queries, if the two supermodular functions are both monotone non-decreasing or non-increasing.
\end{abstract}
A set function $f:2^{E}\rightarrow \mathbb{R}^{+}$ defined on ground $E$ of size $n$ is \emph{supermodular}, if inequality $f(S)+f(T)\leq f(S\cup T)+f(S\cap T)$ holds for any two subsets $S,T\subseteq E$. In this paper we consider the problem of optimizing the ratio of two monotone supermodular functions, which is proposed in~\cite{bai2016algorithms}. Formally we are given two monotone supermodular functions $f: 2^{E}\rightarrow \mathbb{R}^{+}$ and $g: 2^{E}\rightarrow \mathbb{R}^{+}$, the objective is to optimize set function $h: 2^{E}\rightarrow \mathbb{R}^{+}$, where $h(S)=f(S)/g(S)$ for $\forall S\subseteq E$. 

Our main result is stated in the following theorem. It is worth noting that the problem is trivial if one of the two supermodular functions is monotone non-decreasing while the other one is monotone non-increasing, since the objective $h$ is monotone in this case. 

\begin{theorem}
No polynomial time algorithm can achieve bounded approximation ratio for optimizing the ratio of two supermodular functions, if the two supermodular functions are both monotone non-decreasing or non-increasing. \end{theorem}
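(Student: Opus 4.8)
First I would dispose of the non-increasing case by complementation: if $f,g$ are monotone non-increasing supermodular, then $\hat f(S):=f(E\setminus S)$ and $\hat g(S):=g(E\setminus S)$ are monotone non-decreasing supermodular, $h(S)=\hat f(E\setminus S)/\hat g(E\setminus S)$, and a value oracle for $(f,g)$ is a value oracle for $(\hat f,\hat g)$; hence optimizing $h$ is exactly the same problem for the pair $(\hat f,\hat g)$, and it suffices to treat the monotone non-decreasing case. Likewise minimization and maximization are interchangeable, since $\max_S f(S)/g(S)=1/\min_S g(S)/f(S)$ and swapping $f,g$ preserves ``monotone non-decreasing supermodular''; so I would prove the bound only for minimization.

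\textbf{The planted construction.} The plan is to invoke Yao's minimax principle: exhibit a distribution over instances $(f,g)$ on which every deterministic algorithm issuing $q=\mathrm{poly}(n)$ value queries returns, with probability $1-o(1)$, a set of ratio $\Omega(1)$, whereas the optimal ratio is $n^{-\omega(1)}$ (indeed as small as we wish). Concretely, fix a non-decreasing convex $\phi:\mathbb Z_{\ge0}\to\mathbb Z_{>0}$ (e.g.\ $\phi(m)=\binom m2+1$), a size parameter $t=t(n)$, and a scale $M=M(n)$ with $\phi(t)\ll M$; sample a ``planted'' object $R$ (a random member of a large combinatorial family of subsets), and set
\[
f(S)=\phi(|S|)+\mathbbm{1}[\,R\ \text{activated by}\ S\,],\qquad g(S)=\phi(|S|)+M\cdot\mathbbm{1}[\,R\ \text{activated by}\ S\,],
\]
where ``$R$ activated by $S$'' is a monotone supermodular predicate of $S$ (for instance $S\supseteq R$, whose indicator is monotone and supermodular). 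Since $\phi(|S|)$ is supermodular (convex composed with the modular map $S\mapsto|S|$), both $f,g$ are monotone non-decreasing supermodular and strictly positive. On non-activating sets $h\equiv1$; on activating sets $h(S)=(\phi(|S|)+1)/(\phi(|S|)+M)$, which is increasing in $|S|$ and hence minimized on a minimal activator, giving $\mathrm{OPT}\le 2\phi(t)/M$. In particular any $S$ with $h(S)\le\alpha\cdot\mathrm{OPT}$ must activate $R$ with $\phi(|S|)=O(\alpha\,\phi(t))$, i.e.\ be essentially a minimal activator — which encodes $R$.

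\textbf{Indistinguishability, and the main obstacle.} The heart of the proof is that each value query to $S$ reveals only the single bit ``is $R$ activated by $S$?'', and that $\mathrm{poly}(n)$ such bits cannot pin down $R$. I would argue that for a uniformly random $R$ in the planted family, any fixed $q=\mathrm{poly}(n)$ queries are all non-activating with probability $1-o(1)$, and that, conditioned on the entire transcript, $R$ still has min-entropy $\omega(\log n)$; consequently, whatever set the algorithm outputs, the probability over $R$ that it is a minimal activator is $n^{-\omega(1)}$. Then with probability $1-o(1)$ the output lies in the region where $h\equiv1$ while $\mathrm{OPT}\le2\phi(t)/M$, which for super-polynomial $M$ is an unbounded ratio; Yao's principle then lifts this to randomized algorithms. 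The delicate point — and the step I expect to be the real work — is making the indistinguishability/entropy claim hold for a \emph{specific} supermodular activation predicate: a naive choice such as ``$S\supseteq R$'' with $R$ of any size $<n$ is defeated by the $n$ co-singleton queries $S=E\setminus\{x\}$, which read $R$ off one element at a time. The crux is therefore to design the planted family and the (monotone, supermodular, positivity-preserving) activation predicate so that polynomially many activation queries provably leave super-polynomially many candidates for $R$; everything else above is routine.
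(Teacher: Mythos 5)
Your two reductions are correct, and the complementation trick $\hat f(S):=f(E\setminus S)$ (which preserves supermodularity while flipping monotonicity) is a genuine simplification over the paper, which instead builds two separate hard instances, one per monotonicity case. The high-level plan --- plant a hidden $R$, show that polynomially many value queries cannot locate it, and conclude an unbounded gap between the planted optimum and the value-$1$ region --- is also exactly the paper's. The problem is that your proposal stops precisely where the proof has to begin: no hard instance is actually exhibited, and you yourself flag the design of the activation predicate as ``the crux.'' This is not a detail that can be deferred as ``routine'': the template you commit to --- add to $\phi(|S|)$ a $\{0,1\}$-valued, monotone non-decreasing, \emph{supermodular} activation indicator --- provably cannot be instantiated. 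If $u$ is $\{0,1\}$-valued, monotone and supermodular, then $u(S)=u(T)=1$ forces $u(S\cup T)+u(S\cap T)\geq 2$, hence $u(S\cap T)=1$; so the up-set $\{S: u(S)=1\}$ is intersection-closed and therefore a principal filter $\{S: S\supseteq R\}$ (or trivial). In other words, the only nonconstant predicates admissible in your framework are exactly the ones you already observed are defeated by the $n$ co-singleton queries $E\setminus\{x\}$. So the remaining ``min-entropy'' step is not merely unproved; within the stated construction it is false.

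Completing the argument requires abandoning the additive $0/1$-indicator template, and the paper does so in two different ways. For the non-increasing case it uses a Svitkina--Fleischer/Goemans-et-al.-type pair, $f(S)=\alpha+\varepsilon-\min\{\alpha,|S|\}$ versus $g_R(S)=\alpha+\varepsilon-\min\{\beta+|S\cap\bar R|,\alpha,|S|\}$: the planted perturbation is not a set-containment bit but a \emph{graded} quantity $|S\cap\bar R|$ with an additive slack $\beta$, chosen (via $\alpha=x\sqrt n/5$, $\beta=x^2/5$) so that for a random $R$ a fixed query satisfies $\beta+|S\cap\bar R|<\min\{\alpha,|S|\}$ only with negligible probability --- this is what defeats the co-singleton attack. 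For the non-decreasing case the paper makes $f$ and $g$ depend on $|S|$ alone except for a bump of $g$ at a single random set $R$ of size $\lfloor n/2\rfloor$; the perturbation $\mathbbm{1}[S=R]$ is \emph{not} itself supermodular, but the perturbed $g_R$ stays supermodular because of the discontinuous jump of $g$ above cardinality $\lfloor n/2\rfloor$, and indistinguishability is immediate since a polynomial query set misses some candidate $R^*$ among the exponentially many $\lfloor n/2\rfloor$-sets. Either device would rescue your outline; as written, the proposal identifies the right obstacle but does not overcome it.
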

\begin{proof}
We first note that it suffices to prove the result for minimization problem, since the algorithm for maximizing $g/h$ can be used to obtain a solution for minimizing $h=f/g$. Hence in the following we focus on the problem of minimizing function $h$.

When both $f$ and $g$ are monotone non-increasing, we consider the following instance, which is a simple modification of that in~\cite{svitkina2008submodular,goemans2009approximating,bai2016algorithms},
\begin{align*}
f(S)&=\alpha+\varepsilon-\min\{\alpha, |S|\}\\
g_{R}(S)&=\alpha+\varepsilon-\min\{\beta+|S\cap \bar{R}|,\alpha,|S|\},
\end{align*}
where $\varepsilon>0$, $\alpha=x\sqrt{n}/5$, $\beta=x^{2}/5$ for $x=\omega(\log n)$, $R$ is a random set with cardinality $\alpha$.  Note that $f(S)\neq g_{R}(S)$ if and only if $\beta+|S\cap \bar{R}|<\min\{\alpha, |S|\}$, via similar arguments to~\cite{svitkina2008submodular,goemans2009approximating,bai2016algorithms}, we can show that one cannot distinguish $f$ and $g_{R}$ in polynomial number of queries (together with the returned solution set). On the other hand, note that $f(R)/g_{R}(R)=\frac{\varepsilon}{\alpha+\varepsilon-\beta}\leq \varepsilon$, while $h$ achieves an objective value of $1$ at the returned set, this implies that the approximation ratio of a polynomial time algorithm could be arbitrary bad when $\varepsilon \rightarrow 0$.

When both $f$ and $g$ are monotone non-decreasing, consider functions
\begin{align*}
f(S)=
\begin{cases}
|S|& \text{$|S|\leq \lfloor \frac{n}{2}\rfloor$}\\
m\cdot 2^{|S|+1}+|S| & \text{$|S|>\lfloor \frac{n}{2}\rfloor$}
\end{cases}	
\end{align*}
and 
\begin{align*}
g(S)=
\begin{cases}
\frac{2|S|}{n}\cdot \varepsilon& \text{$|S|\leq \lfloor \frac{n}{2}\rfloor$}\\
2(|S|- \lfloor \frac{n}{2}\rfloor) & \text{$|S|>\lfloor \frac{n}{2}\rfloor$},
\end{cases}	
\end{align*}
then the ratio of $f$ and $g$ is given as 
\begin{align*}
h(S)=\frac{f(S)}{g(S)}=\begin{cases}
\frac{n}{2\varepsilon}& \text{$|S|\leq \lfloor \frac{n}{2}\rfloor$}\\
\frac{m\cdot 2^{|S|+1}+|S|}{2(|S|- \lfloor \frac{n}{2}\rfloor)} & \text{$|S|>\lfloor \frac{n}{2}\rfloor$}.
\end{cases}	
\end{align*}
It can be verified that both $f$ and $g$ are monotone non-decreasing supermodular functions when $m>0$ and $\varepsilon\in (0,\frac{n}{n+2}]$. Furthermore, for any set $S$ we have
\begin{align*}
h(S)\geq \min\Big\{\frac{n}{2\varepsilon},m\Big\}.
\end{align*}
Let $R$ be a random set with size $\lfloor \frac{n}{2}\rfloor$. We further consider the following supermodular functions 
\begin{align*}
f_{R}(S)=f(S)
\end{align*}
and
\begin{align*}
g_{R}(S)=
\begin{cases}
\frac{2|S|}{n}\cdot \varepsilon& \text{$|S|\leq \lfloor \frac{n}{2}\rfloor$}\\
1& \text{$S=R$}\\
2(|S|- \lfloor \frac{n}{2}\rfloor) & \text{$|S|>\lfloor \frac{n}{2}\rfloor$},
\end{cases}	
\end{align*}
then $h_{R}(R)=|R|=\lfloor \frac{n}{2}\rfloor$. Let $\mathcal{Q}=\{S_{1}, S_{2},\ldots, S_{|\mathcal{Q}|}\}$ be the query sequence issued by a polynomial time algorithm. As there are exponential number of subsets with size $\lfloor \frac{n}{2}\rfloor$, there must exist $R^{*}$ such that $f(S_{i})=f_{R^{*}}(S_{i})$ and $g(S_{i})=g_{R^{*}}(S_{i})$ for $\forall i\in[|\mathcal{Q}|]$ since $|\mathcal{Q}|$ is polynomial in the size of ground set. As a consequence, functions $h(\cdot)$ and $h_{R^{*}}(\cdot)$ are indistinguishable under query set $\mathcal{Q}$, the corresponding gap in the minimum objective value provides a lower bound on the approximation ratio:
\begin{align*}
\frac{\min h(\cdot)}{\min h_{R^{*}}(\cdot)}\geq \frac{\min\{\frac{n}{2\varepsilon},m\}}{h_{R^{*}}(R^{*})}\geq \min\Big\{\frac{1}{\varepsilon},\frac{2m}{n}\Big\},
\end{align*}
which approaches infinity when $\varepsilon \rightarrow 0$ and $m \rightarrow \infty$. The proof is complete.
\end{proof}
\bibliographystyle{plain}	
\bibliography{paper}

\begin{thebibliography}{1}

\bibitem{bai2016algorithms}
Wenruo Bai, Rishabh Iyer, Kai Wei, and Jeff Bilmes.
\newblock Algorithms for optimizing the ratio of submodular functions.
\newblock In {\em ICML}, pages 2751--2759, 2016.

\bibitem{goemans2009approximating}
Michel~X Goemans, Nicholas~JA Harvey, Satoru Iwata, and Vahab Mirrokni.
\newblock Approximating submodular functions everywhere.
\newblock In {\em SODA}, pages 535--544, 2009.

\bibitem{svitkina2008submodular}
Zoya Svitkina and Lisa Fleischer.
\newblock Submodular approximation: Sampling-based algorithms and lower bounds.
\newblock In {\em FOCS}, pages 697--706, 2008.

\end{thebibliography}
\end{document}